\documentclass[runningheads]{waica}
\usepackage{amsmath,amssymb,bm}

\usepackage{amsthm}
\usepackage{graphicx}
\graphicspath{{figs/}{./}}
\usepackage{booktabs}
\usepackage{multirow}
\usepackage{algorithm}
\usepackage{algpseudocode}
\usepackage{hyperref}
\usepackage{xcolor}
\usepackage{enumitem}
\usepackage{caption}
\usepackage{subcaption}
\usepackage{microtype}
\usepackage{url}
\usepackage[numbers]{natbib}
\usepackage{wrapfig}

\newtheorem{assumption}{Assumption}

\newcommand{\bpo}{\textsc{BPO}}
\newcommand{\grpo}{\textsc{GRPO}}
\newcommand{\rloo}{\textsc{RLOO}}
\newcommand{\ppo}{\textsc{PPO}}
\newcommand{\vineppo}{\textsc{VinePPO}}
\newcommand{\E}{\mathbb{E}}
\newcommand{\Var}{\mathrm{Var}}

\begin{document}
\title{Branching Policy Optimization: Sandbox-Native Language Agent Reinforcement Learning}
\titlerunning{Sandbox-Native Language Agent Reinforcement Learning}
% If the paper title is too long for the running head, you can set
% an abbreviated paper title here

%This paper is prepared for double-blind peer review with anonymized author details

\author{Bowei He\inst{1,2} \and
Yankai Chen\inst{1,2} \and
Xiaokun Zhang\inst{3} \and Xue Liu\inst{1,2}}
\authorrunning{Bowei He et al.}
% First names are abbreviated in the running head.
% If there are more than two authors, 'et al.' is used.
%
\institute{Mohamed bin Zayed University of Artificial Intelligence, Abu Dhabi, UAE \and
McGill Univeristy, Montreal, Canada \and
City University of Hong Kong, Hong Kong SAR.\\
\email{\{Bowei.He, Yankai.Chen\}@mbzuai.ac.ae}}

\maketitle              % typeset the header of the contribution
\begin{abstract}Reinforcement learning has emerged as the dominant paradigm for training large language model (LLM) agents that interact with executable sandboxes. State-of-the-art algorithms such as PPO, RLOO, and GRPO inherit their rollout topology from RLHF: for each prompt, $N$ independent trajectories are sampled from the initial state, and an advantage is computed by subtracting a group baseline. This design ignores a defining property of agent sandboxes. They are \emph{deterministic, snapshottable, and resumable from any intermediate state}. We argue that this property enables a fundamentally different rollout topology: rather than $N$ independent trees of depth $T$, one can construct a single tree of $N$ leaves whose siblings share prefixes, and therefore share variance. We instantiate this idea as \textbf{Branching Policy Optimization} (\bpo), a sandbox-native RL algorithm that (i) adaptively snapshots the sandbox at high-entropy decision points along a backbone trajectory, (ii) forks $K$ alternative actions per branch point and rolls out each to termination, and (iii) computes per-step advantages from sibling returns rather than from independent prompts. We prove this estimator is unbiased and has strictly lower variance than the trajectory-level baseline, with the reduction equal to the prefix-explained portion of return variance. On WebShop, ALFWorld, and SWE-bench Verified with Qwen2.5-7B and Llama-3.1-8B backbones, \bpo{} improves success by $3.6$--$6.1$ absolute points over GRPO and RLOO at matched compute, halves gradient-norm variance, and matches the best baseline using $\mathbf{38\%}$ fewer policy updates.

\keywords{Sandbox  \and Reinforcement Learning \and Language Agent.}
\end{abstract}
\section{Introduction}
The last two years have established executable sandboxes as the default training ground for language-model agents. Code, web, OS, and tool-use benchmarks~\citep{jimenez2024swebench,zhou2024webarena,yao2022webshop,shridhar2021alfworld,qin2024toolllm} now provide both the supervisory signal (a verifiable success criterion) and the interaction surface (a stateful runtime) that supervised data alone cannot. In parallel, reinforcement learning, once viewed as an instability-prone last step of RLHF~\citep{ouyang2022instructgpt,christiano2017preferences,bai2022constitutional}, has re-emerged as the lever that unlocks substantial agent capability gains~\citep{guo2025deepseekr1,wei2026swe,qi2025webrl,wang2025ragen}.
 
Algorithmically, this resurgence has been driven by simplifications of trajectory-level policy gradient. PPO~\citep{schulman2017ppo} remains the workhorse, but the recent trend is towards \emph{baseline-only} variants that omit the learned critic: RLOO~\citep{ahmadian2024rloo}, GRPO~\citep{shao2024deepseekmath}, RAFT~\citep{dong2023raft}, and ReST~\citep{gulcehre2023reinforced} all share a common structural pattern. For each prompt $x$, they sample $N$ independent rollouts from the initial state $s_0$, observe a terminal reward per rollout, and form an advantage estimate by subtracting the empirical mean (or median) of the $N$ returns from each individual return. The estimator's variance is then dominated by the variance of returns \emph{conditioned only on the prompt}, which for long-horizon agentic tasks is exorbitant: a single early misstep can determine the outcome of a fifty-step trajectory~\citep{shinn2023reflexion,yao2023react}.
 
Crucially, this estimator design is inherited from preference-based RLHF, where the ``environment'' is a static prompt that admits no meaningful intermediate state. In sandboxes, this inheritance is no longer justified. An agent sandbox is, almost by definition, a stateful Markov decision process whose state can be snapshotted (via copy-on-write filesystems, virtual-machine fork, or pure-functional interpreter state) and \emph{restored from any prefix}. This property is exploited at inference time by tree-search methods like Tree-of-Thoughts~\citep{yao2023tot}, RAP~\citep{hao2023reasoning}, and AlphaZero-style decoding~\citep{wan2024alphazero,xie2monte,zhang2024restmctsstar}, but is essentially unused at training time, where rollouts remain stubbornly independent.
 
We propose to close this gap. Our core observation is that the standard group-baseline estimator removes only the variance attributable to the \emph{initial state}, whereas the sandbox lets us condition the baseline on \emph{any intermediate state}. Concretely, if siblings share a long prefix $\tau_{0:t}$, the variance of the resulting sibling-baseline advantage is at most the post-branching variance $\E[\Var(R \mid s_t) \mid s_0]$, which the law of total variance shows is strictly smaller than the total return variance $\Var(R \mid s_0)$ used by GRPO. The reduction equals the prefix-explained component $\Var(\E[R \mid s_t] \mid s_0) = \Var(V^\pi(s_t) \mid s_0)$, exactly the quantity we lose by ignoring prefix structure.
 
We turn this observation into a concrete algorithm, \textbf{Branching Policy Optimization} (\bpo). \bpo{} (i) samples one \emph{backbone} trajectory, (ii) selects $M$ \emph{branch points} along the backbone according to the policy's per-step entropy, (iii) restores the sandbox to each branch state and forks $K$ alternative actions, (iv) rolls each fork out to termination, and (v) computes a tree-structured, sibling-baseline advantage that is used in a standard clipped-policy-gradient update. The total number of return samples is matched against GRPO, so all comparisons are at equal compute. We make four contributions:
 
\begin{itemize}[leftmargin=*,topsep=2pt,itemsep=1pt]
    \item \textbf{Algorithm.} We design \bpo, the first sandbox-native RL algorithm that uses checkpoint-restore as a first-class training primitive, together with an entropy-driven branch scheduler that allocates branching budget to the steps with greatest policy uncertainty.
    \item \textbf{Theory.} We prove that the sibling-baseline advantage is unbiased (Theorem~\ref{thm:unbiased}) and that its variance is at most $K/(K-1) \cdot \E[\Var(R \mid s_t)]$, which is strictly smaller than the corresponding GRPO variance for any branching point $t > 0$ (Theorem~\ref{thm:variance}). A multi-branch extension follows by induction.
    \item \textbf{Empirical.} On WebShop, ALFWorld, and SWE-bench Verified with two backbones, \bpo{} outperforms PPO, RLOO, GRPO, and VinePPO~\citep{kazemnejad2024vineppo} by $3.6$--$6.1$ absolute points at matched compute, achieves the same final performance as GRPO in $0.62\times$ the gradient steps, and exhibits roughly half the empirical gradient variance.
    \item \textbf{Analysis.} We provide ablations on branch width $K$, branch count $M$, schedule (entropy-based vs.\ uniform vs.\ lowest-entropy), and sandbox snapshot overhead, isolating where the gains come from and where they saturate.
\end{itemize}
\vspace{-2mm}

\section{Related Work}
\textbf{RL for language models.}
The PPO algorithm~\citep{schulman2017ppo} underlies most large-scale RLHF deployments~\citep{ouyang2022instructgpt,bai2022constitutional}, and continues to be the dominant choice for agent RL. Recent work shows that, in the LLM regime, the learned critic of PPO is often unnecessary: REINFORCE-style baselines such as RLOO~\citep{ahmadian2024rloo, kool2019buy}, GRPO~\citep{shao2024deepseekmath}, and group-relative variants of RAFT~\citep{dong2023raft} and ReST~\citep{gulcehre2023reinforced} achieve competitive performance at substantially lower memory cost. Direct preference methods such as DPO~\citep{rafailov2023dpo} avoid online sampling entirely, but their applicability is limited to settings where preferences over complete trajectories are available. STaR~\citep{zelikman2022star} and its successors recycle on-policy successes via iterative SFT. Our work extends the baseline-only family by changing the topology of rollouts rather than the loss.
 
\textbf{Process supervision and value estimation.}
A parallel line of work attempts to densify the reward by introducing process rewards~\citep{lightman2023verifystepbystep,wang2024mathshepherd}, often via a separately trained PRM. VinePPO~\citep{kazemnejad2024vineppo} is the closest in spirit to ours: it replaces PPO's learned critic with Monte Carlo value estimates obtained by rolling out from intermediate states. Two important differences distinguish \bpo. First, VinePPO uses MC rollouts as a \emph{value oracle} that is plugged into a standard GAE~\citep{schulman2016gae} computation; \bpo{} instead uses sibling rollouts as a \emph{sibling baseline} that yields an unbiased advantage directly, with no critic. Second, VinePPO uniformly samples states to evaluate, whereas \bpo{} allocates its branching budget adaptively by per-step entropy.
 
\textbf{Tree search for LLMs.}
Inference-time search has been studied extensively. Tree-of-Thoughts~\citep{yao2023tot} and RAP~\citep{hao2023reasoning} demonstrate that explicit search at decoding time can substantially improve reasoning. TS-LLM~\citep{wan2024alphazero}, MCTS-DPO~\citep{xie2monte}, and ReST-MCTS$^\star$~\citep{zhang2024restmctsstar} extend this to training, typically by using MCTS to generate higher-quality SFT or DPO data. These methods inherit the value-network design of AlphaZero~\citep{silver2017alphagozero}, with all the associated complexity. \bpo{} differs in that (i) it is a pure policy-gradient method with no learned value function and no search at inference, and (ii) its tree structure is constructed for variance reduction rather than for finding higher-value trajectories. The two perspectives are complementary.
 
%\textbf{Variance reduction in policy gradient.}
%Variance reduction is a classical theme in policy-gradient theory~\citep{williams1992reinforce,sutton1999policy}. Greensmith et al.~\citep{greensmith2004variance} give general baseline-variance bounds. State-conditional baselines, action-dependent baselines~\citep{liu2018actiondep}, and counterfactual baselines for multi-agent settings (COMA;~\citealp{foerster2018counterfactual}) all aim to reduce variance by conditioning on more context. Tucker et al.~\citep{tucker2018mirage} caution that gains from action-dependent baselines are sometimes illusory once careful implementations are compared. Our sibling baseline is action-\emph{independent} but state-conditional at an arbitrarily deep state, and its variance reduction is guaranteed by the law of total variance, sidestepping the pitfalls of~\citep{tucker2018mirage}.
 
%\textbf{Agent benchmarks and agent RL.}
%ReAct~\citep{yao2023react} and Reflexion~\citep{shinn2023reflexion} established the modern agent paradigm. Sandbox benchmarks include WebShop~\citep{yao2022webshop}, ALFWorld~\citep{shridhar2021alfworld}, WebArena~\citep{zhou2024webarena}, and SWE-bench~\citep{jimenez2024swebench,yang2024sweagent}. Most recent agent-RL works adopt GRPO-style trajectory rollouts: SWE-RL~\citep{wei2026swe} for code, WebRL~\citep{qi2025webrl} for web, RAGEN~\citep{wang2025ragen} for general agents, and DeepSeek-R1~\citep{guo2025deepseekr1} for chain-of-thought. None of these works exploits sandbox snapshot/restore at training time; \bpo{} is, to our knowledge, the first to do so.

\section{Preliminaries}
\label{sec:prelim}
 
\textbf{Agent MDP.}
We model an agent in a sandbox as an episodic Markov decision process $\mathcal{M} = (\mathcal{S}, \mathcal{A}, \mathcal{P}, r, \gamma, \mu)$. State $s \in \mathcal{S}$ is a (token-encoded) view of the sandbox: shell history, file system, page DOM, etc. Action $a \in \mathcal{A}$ is a token sequence emitted by the policy that is then parsed and executed by the sandbox. The transition kernel $\mathcal{P}(s' \mid s, a)$ is the sandbox's own dynamics; importantly, $\mathcal{P}$ is implemented by deterministic code, but we treat it as stochastic to absorb any nondeterminism (e.g.\ network responses, randomized seeds). Reward $r$ is provided by an external verifier (unit tests, web success oracle) and is sparse: typically $r_t = 0$ for $t < T$ and $r_T \in \{0,1\}$ at termination. We write $R(\tau) = \sum_{t=0}^{T} \gamma^t r_t$ for the discounted return of trajectory $\tau$, and use $G_t = \sum_{t' \geq t} \gamma^{t'-t} r_{t'}$ for the return-to-go from step $t$.
 
\textbf{Sandbox primitives.}
What distinguishes a sandbox MDP from a generic MDP is the availability of a \emph{snapshot} operator $\mathrm{snap}: \mathcal{S} \to \Sigma$ and a \emph{restore} operator $\mathrm{rest}: \Sigma \to \mathcal{S}$, where $\Sigma$ is an opaque snapshot space. We assume $\mathrm{rest}(\mathrm{snap}(s)) = s$ in distribution (Assumption~\ref{ass:snapshot}), which captures the resumability of typical sandboxes (Docker overlayfs, CRIU, Python interpreter pickling, browser session export). The cost of a snapshot operation is denoted $c_{\mathrm{snap}}$ and is typically much smaller than the cost of a full rollout.
 
\begin{assumption}[Snapshot fidelity]
\label{ass:snapshot}
For every $s \in \mathcal{S}$ encountered along an on-policy trajectory, $\mathrm{rest}(\mathrm{snap}(s))$ produces a state with identical transition distribution to $s$.
\end{assumption}
 
\textbf{Policy gradient.}
A stochastic policy $\pi_\theta(a \mid s)$ is trained to maximize $J(\theta) = \E_{\tau \sim \pi_\theta}[R(\tau)]$. The policy-gradient theorem~\citep{williams1992reinforce,sutton1999policy} gives
\begin{equation}
    \nabla_\theta J(\theta) = \E_{\tau \sim \pi_\theta}\!\left[ \sum_{t=0}^{T} \nabla_\theta \log \pi_\theta(a_t \mid s_t) \cdot A(s_t, a_t) \right],
    \label{eq:pg}
\end{equation}
where $A(s_t, a_t) = Q^\pi(s_t, a_t) - V^\pi(s_t)$ is the advantage. Replacing $A$ by any function $A'$ such that $\E[A' \mid s_t, a_t] = A$ preserves unbiasedness; the choice of $A'$ controls the variance of the gradient estimator.
 
\textbf{Group-relative baselines.}
\grpo~\citep{shao2024deepseekmath} estimates the advantage at the trajectory level. For prompt $x$, $N$ independent rollouts $\tau^{(1)}, \dots, \tau^{(N)}$ are sampled from $s_0(x)$, with returns $R^{(i)} = R(\tau^{(i)})$. The advantage for every step in $\tau^{(i)}$ is
\begin{equation}
    A^{\grpo}_i = \frac{R^{(i)} - \mu_R}{\sigma_R}, \qquad \mu_R = \frac{1}{N}\sum_j R^{(j)},\ \ \sigma_R^2 = \frac{1}{N}\sum_j (R^{(j)} - \mu_R)^2.
    \label{eq:grpo}
\end{equation}
\rloo~\citep{ahmadian2024rloo} uses the leave-one-out unnormalized variant. Both share the property that the baseline conditions only on $s_0$ and is constant across the trajectory; per-step advantages are merely scaled returns. This is the design we challenge.
 
\section{Branching Policy Optimization}
\label{sec:method}
 
\bpo{} replaces the $N$ independent rollouts of \grpo{} with a single \emph{rollout tree} per prompt. We first describe the tree construction and the resulting advantage estimator (\S\ref{sec:tree}), then the entropy-driven branch scheduler (\S\ref{sec:sched}), then the full algorithm (\S\ref{sec:algo}), and finally the variance analysis (\S\ref{sec:theory}).
 
\begin{figure}[t]
    \centering
    \includegraphics[width=\textwidth]{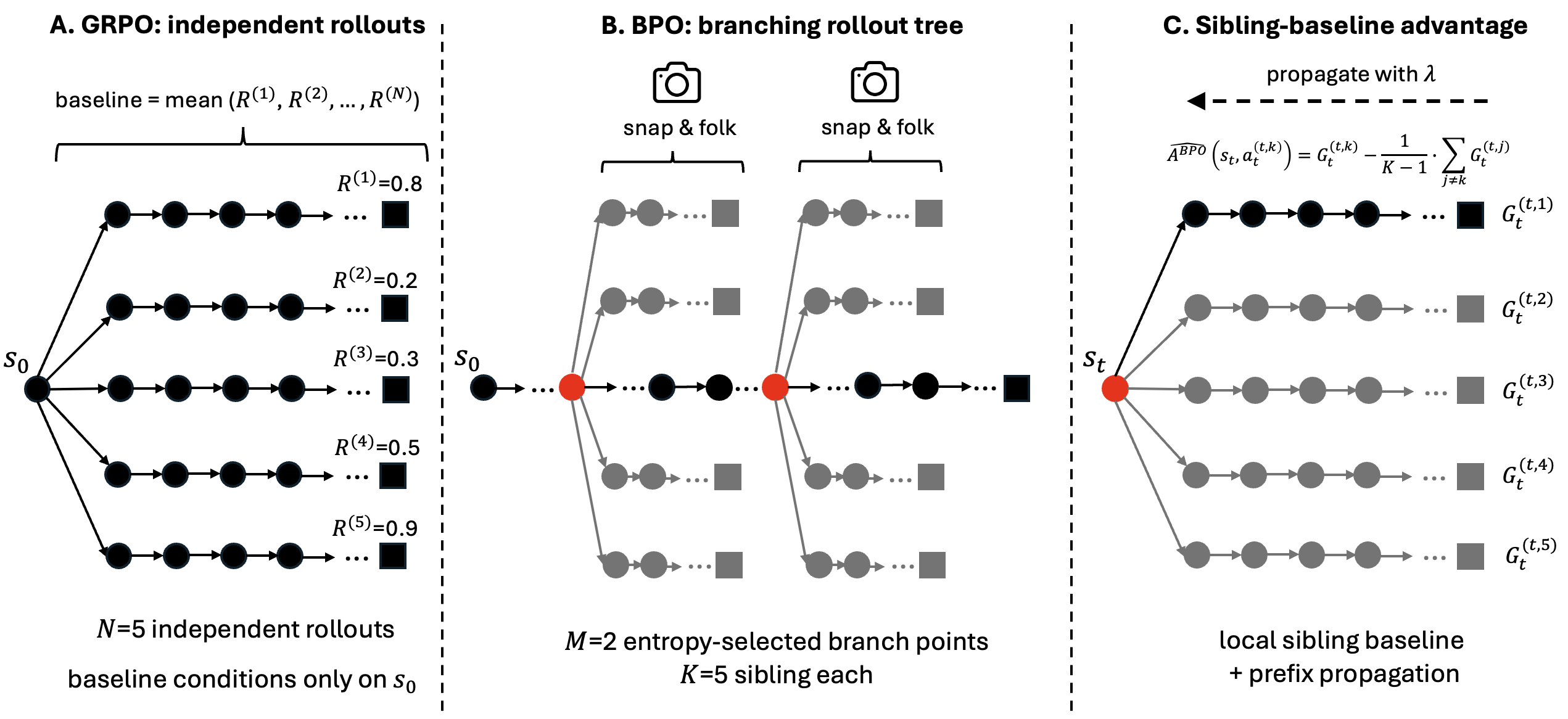}
    \caption{Schematic of \bpo. The backbone trajectory (black) is sampled greedily-ish from $\pi_\theta$; high-entropy steps (red dots) trigger sandbox snapshots and $K{-}1$ sibling forks (grey). The sibling-baseline advantage is computed locally at each branch point and propagated to the shared prefix.}
    \label{fig:framework}
\end{figure}
 
\subsection{Rollout trees and the sibling-baseline advantage}
\label{sec:tree}
 
For each prompt $x$, \bpo{} constructs a rooted tree $\mathcal{T}_x$ whose root is $s_0(x)$. The tree consists of (i) a single \emph{backbone} path $\tau^{(0)} = (s_0, a_0, s_1, \dots, s_{T_0}, a_{T_0})$ sampled by running $\pi_\theta$ to termination, and (ii) at each branch point $t \in \mathcal{B} \subseteq \{0, \dots, T_0{-}1\}$, $K{-}1$ \emph{sibling} sub-trajectories $\{\tau^{(t,k)}\}_{k=2}^{K}$ obtained by snapshotting $s_t$, restoring, sampling $a_t^{(t,k)} \sim \pi_\theta(\cdot \mid s_t)$, and rolling out to termination. The backbone action $a_t^{(t,1)} = a_t$ is treated as the first of the $K$ siblings. We write $G_t^{(t,k)}$ for the return-to-go of sibling $k$ starting at branch point $t$.
 
\textbf{Sibling-baseline advantage.}
At a branch point $t \in \mathcal{B}$, we use a leave-one-out baseline over siblings:
\begin{equation}
    \widehat{A}^{\bpo}(s_t, a_t^{(t,k)}) := G_t^{(t,k)} - \frac{1}{K-1} \sum_{j \neq k} G_t^{(t,j)}.
    \label{eq:bpo_adv}
\end{equation}
For pre-branch steps $t' < t$ that lie on a path from the root to branch point $t$, we propagate the local advantage with discount $\lambda \in (0,1]$:
\begin{equation}
    \widehat{A}^{\bpo}(s_{t'}, a_{t'}) := \sum_{t \in \mathcal{B}(\tau)} \lambda^{t - t'} \cdot \widehat{A}^{\bpo}_{\text{local}}(s_t, a_t^{(t,k(\tau))}),
    \label{eq:bpo_propagate}
\end{equation}
where $\tau$ is the unique path containing $(s_{t'}, a_{t'})$, $\mathcal{B}(\tau)$ are the branch points on $\tau$ at depth $\geq t'$, and $k(\tau)$ is the sibling index of $\tau$. The discount $\lambda$ controls how aggressively we credit upstream actions for downstream sibling-baseline signal; $\lambda = 1$ recovers full propagation and $\lambda \to 0$ recovers a branch-point-only advantage. We default to $\lambda = 0.95$.
 
\textbf{Tree-structured loss.}
\bpo{} optimizes the standard PPO-clip objective with the advantages of Eq.~\eqref{eq:bpo_adv}--\eqref{eq:bpo_propagate}:
\begin{equation}
    \mathcal{L}^{\bpo}(\theta) = -\E_{\mathcal{T}_x}\!\left[ \sum_{(s,a) \in \mathcal{T}_x} \min\!\left( \rho_\theta(s,a) \widehat{A}, \mathrm{clip}(\rho_\theta(s,a), 1{-}\epsilon, 1{+}\epsilon)\widehat{A} \right) \right] + \beta\, \mathrm{KL}\big(\pi_\theta \,\|\, \pi_{\text{ref}}\big),
    \label{eq:loss}
\end{equation}
where $\rho_\theta(s,a) = \pi_\theta(a\mid s) / \pi_{\theta_{\text{old}}}(a \mid s)$ is the standard importance ratio. Note that every $(s,a)$ tuple in the tree contributes one term, so the gradient over the tree weighs each segment proportionally to the number of leaves it supports, an implicit advantage-broadcast that is impossible in trajectory-flat algorithms.
 
\subsection{Entropy-driven branch scheduling}
\label{sec:sched}
 
The choice of branch points $\mathcal{B}$ is the central design degree of freedom of \bpo. Branching at low-entropy steps (where $\pi_\theta$ is already confident) wastes budget on near-duplicate siblings; branching at high-entropy steps maximises the diversity of return outcomes per sibling. We therefore allocate the per-prompt branching budget $M$ to the top-$M$ backbone steps by token-level Shannon entropy
\begin{equation}
    H_t = -\sum_{a \in \mathcal{A}} \pi_\theta(a \mid s_t) \log \pi_\theta(a \mid s_t),
    \label{eq:entropy}
\end{equation}
restricted to a minimum spacing of $\Delta_{\min}$ tokens (default $\Delta_{\min} = 64$) to avoid clustering all branches in a single high-entropy region. We compute $H_t$ from the backbone's first-token distribution at every \emph{decision boundary} (i.e., end of an action, before the next tool call or reasoning step is emitted), not at every token, so that the unit of branching is an agent step rather than a token.
 
\textbf{Why entropy and not value disagreement?}
A natural alternative is to branch where a learned value function disagrees with itself across sibling action samples~\citep{liu2018actiondep}. We deliberately avoid this for two reasons. First, it requires a value network, which the baseline-only family was designed to remove. Second, value-disagreement criteria are confounded with value-function approximation error in early training. Entropy, in contrast, is an intrinsic property of $\pi_\theta$ and is well-calibrated by construction.
 
\subsection{Algorithm}
\label{sec:algo}
 
\begin{algorithm}[t]
\caption{\bpo{} training loop (per gradient step)}
\label{alg:bpo}
\begin{algorithmic}[1]
\Require Policy $\pi_\theta$, reference $\pi_{\text{ref}}$, sandbox env $\mathcal{E}$, prompt batch $\{x_i\}_{i=1}^B$, branch count $M$, branch width $K$, propagation discount $\lambda$, clip $\epsilon$, KL weight $\beta$.
\For{each prompt $x_i$ in parallel}
    \State Sample backbone $\tau^{(0)}_i = (s_0, a_0, \dots, s_{T_0}, a_{T_0})$ from $\pi_\theta$ in $\mathcal{E}$.
    \State Compute step entropies $\{H_t\}_{t=0}^{T_0-1}$ via Eq.~\eqref{eq:entropy}.
    \State Select branch points $\mathcal{B}_i \leftarrow \mathrm{TopM}\big(\{H_t\}, M, \Delta_{\min}\big)$.
    \For{each $t \in \mathcal{B}_i$}
        \State $\sigma \leftarrow \mathrm{snap}(s_t)$. \Comment{$O(c_{\mathrm{snap}})$, typically $\ll$ rollout}
        \For{$k = 2, \dots, K$ in parallel}
            \State Restore $s_t \leftarrow \mathrm{rest}(\sigma)$; sample $a_t^{(t,k)} \sim \pi_\theta(\cdot \mid s_t)$.
            \State Roll out sibling $k$ to termination; record $G_t^{(t,k)}$.
        \EndFor
    \EndFor
    \State Compute branch-point advantages $\{\widehat{A}^{\bpo}(s_t, a_t^{(t,k)})\}$ via Eq.~\eqref{eq:bpo_adv}.
    \State Propagate to pre-branch steps via Eq.~\eqref{eq:bpo_propagate}.
\EndFor
\State Update $\theta \leftarrow \theta - \eta \nabla_\theta \mathcal{L}^{\bpo}(\theta)$ using Eq.~\eqref{eq:loss}.
\end{algorithmic}
\end{algorithm}
 
The full procedure is shown in Algorithm~\ref{alg:bpo}. We highlight three properties. (i) The total number of sampled returns per prompt is $1 + M(K{-}1)$, which we match against \grpo's $N$ by setting $1 + M(K{-}1) = N$ in all comparisons. (ii) Sibling rollouts within a branch point are embarrassingly parallel; we batch them across the LLM and across the sandbox worker pool. (iii) The only per-step bookkeeping required is the snapshot $\sigma$ and the entropy $H_t$, both of which are $O(1)$ in trajectory length.
 
\subsection{Theoretical analysis}
\label{sec:theory}
We now establish unbiasedness and a variance-reduction guarantee for the sibling-baseline advantage at a single branch point. The multi-branch case follows by induction.
 
\begin{theorem}[Unbiasedness]
\label{thm:unbiased}
Let $s_t$ be a state reached along an on-policy prefix, and let $a_t^{(t,1)}, \dots, a_t^{(t,K)}$ be $K$ i.i.d.\ samples from $\pi_\theta(\cdot \mid s_t)$, each rolled out to termination yielding returns $G_t^{(t,1)}, \dots, G_t^{(t,K)}$. Then for each $k \in \{1, \dots, K\}$,
\[
\E\!\left[ \widehat{A}^{\bpo}(s_t, a_t^{(t,k)}) \,\big|\, s_t, a_t^{(t,k)} \right] = A^\pi(s_t, a_t^{(t,k)}) := Q^\pi(s_t, a_t^{(t,k)}) - V^\pi(s_t).
\]
Consequently, the \bpo{} policy-gradient estimator (Eq.~\eqref{eq:loss} with $\beta = 0$, $\epsilon = \infty$) is unbiased for $\nabla_\theta J(\theta)$.
\end{theorem}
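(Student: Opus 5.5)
The plan is to split the estimator of Eq.~\eqref{eq:bpo_adv} into its two parts: the sibling's own return $G_t^{(t,k)}$ and the leave-one-out baseline $\frac{1}{K-1}\sum_{j\neq k} G_t^{(t,j)}$. I will compute the conditional expectation of each given $(s_t, a_t^{(t,k)})$ separately.

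\textbf{Own-return term.} Sibling $k$ starts from $s_t$, takes action $a_t^{(t,k)}$, and then follows $\pi_\theta$ to termination. For sibling $k\ge 2$ the start state is $\mathrm{rest}(\mathrm{snap}(s_t))$, and Assumption~\ref{ass:snapshot} says this has the same transition distribution as $s_t$. For the backbone sibling $k=1$ no restore is involved. In both cases the Markov property gives $\E[G_t^{(t,k)} \mid s_t, a_t^{(t,k)}] = Q^\pi(s_t, a_t^{(t,k)})$.

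\textbf{Baseline term.} For each $j\neq k$, I will argue that $(a_t^{(t,j)}, \text{continuation}_j)$ is conditionally independent of $(a_t^{(t,k)}, \text{continuation}_k)$ given $s_t$. The actions are i.i.d.\ draws by hypothesis, and the rollouts after the fork use fresh randomness from independent sandbox restores. Then
\[
\E[G_t^{(t,j)} \mid s_t, a_t^{(t,k)}] = \E[G_t^{(t,j)} \mid s_t] = \E_{a\sim\pi_\theta(\cdot\mid s_t)}[Q^\pi(s_t,a)] = V^\pi(s_t).
\]
Averaging the $K-1$ identical expectations gives $V^\pi(s_t)$. By linearity, the conditional mean of the estimator is $Q^\pi - V^\pi = A^\pi$, which is the first claim.

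\textbf{Unbiasedness of the gradient.} With $\epsilon=\infty$ and $\beta=0$, the gradient of Eq.~\eqref{eq:loss} at $\theta=\theta_{\text{old}}$ is the sum over tree nodes of $\nabla\log\pi_\theta(a\mid s)\,\widehat{A}$. For each branch-point node I apply the tower property: condition on $(s_t, a_t^{(t,k)})$, pull out the score function, and substitute the conditional mean $A^\pi$. This matches the per-step integrand of Eq.~\eqref{eq:pg}, using the remark there that any $A'$ with $\E[A'\mid s_t,a_t]=A$ preserves unbiasedness. Each sibling's marginal law of $(s_t,a_t^{(t,k)})$ is the on-policy visitation law, so the expected contribution equals the policy-gradient term at step $t$, up to the known per-node multiplicity weighting.

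\textbf{Main obstacle.} The difficulty is making this last step honest.
\begin{itemize}
\item The branch points $\mathcal{B}$ are chosen from backbone entropies, so they depend on $\theta$ and on the backbone's own sampled trajectory. The backbone's action $a_t$ is therefore not strictly a fresh i.i.d.\ draw once we condition on $t$ having been selected.
\item The propagated advantages of Eq.~\eqref{eq:bpo_propagate} with $\lambda<1$ generally do not have conditional mean $A^\pi(s_{t'},a_{t'})$.
\end{itemize}
I would handle this by restricting the "consequently" claim to branch-point terms and to the stated i.i.d.\ hypothesis. I would argue that the selection rule depends on $H_t$, which is a function of $s_t$ alone and not of the realized action. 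Given $s_t$, the event $t\in\mathcal{B}$ then does not bias the action distribution, provided we also treat the backbone's later entropies as a measurable function of the prefix. If that fails, I would note that one can resample the backbone action at the branch point to restore exact i.i.d.-ness. The up-weighting of shared prefixes would be stated as a reweighting of the sum over steps that leaves the direction of each per-step term unbiased, rather than claiming equality with $\nabla J$ in total.
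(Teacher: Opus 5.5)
Your proof of the displayed identity is the paper's proof. You split $\widehat{A}^{\bpo}$ into the own return, whose conditional mean is $Q^\pi(s_t,a_t^{(t,k)})$ by definition, and the leave-one-out average, each of whose terms has conditional mean $V^\pi(s_t)$ by independence given $s_t$. You also make explicit the snapshot-fidelity and fresh-randomness assumptions that the paper leaves implicit.

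For the ``consequently'', the paper gives a single sentence: substitute an unbiased advantage into Eq.~\eqref{eq:pg}. That sentence tacitly requires two things: every $(s,a)$ in the tree carries an advantage with conditional mean $A^\pi(s,a)$, and tree nodes are weighted by on-policy visitation. Neither holds for the tree loss of Eq.~\eqref{eq:loss}. So restricting to branch-point terms at a depth chosen independently of the trajectory is exactly what that sentence can support, and you are right not to claim more.

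Three points in your obstacle discussion need correcting.

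First, the propagated advantages fail for every $\lambda\in(0,1]$, not only for $\lambda<1$. For a fixed branch depth $t>t'$, $\E[\widehat{A}^{\bpo}(s_t,a_t)\mid s_{t'},a_{t'}] = \E[A^\pi(s_t,a_t)\mid s_{t'},a_{t'}]=0$. The reason is that the sibling baseline depends on $a_{t'}$ through $s_t$, so it cancels the upstream signal. Pre-branch terms therefore have conditional mean $0$ rather than $A^\pi(s_{t'},a_{t'})$. If a prefix step is credited once per sibling, they cancel identically, since $\sum_{k=1}^K \widehat{A}^{\bpo}(s_t,a_t^{(t,k)})=0$.

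Second, your first attempt at the selection issue does not work. $\mathrm{TopM}$ with spacing $\Delta_{\min}$ ranks $H_t$ against $H_{t+1},\dots,H_{T_0-1}$. These are functions of states downstream of the backbone action, so your proviso that later entropies be measurable in the prefix is false. Conditioned on $t\in\mathcal{B}$, both $a_t^{(t,1)}$ and $G_t^{(t,1)}$ are biased. The algorithm as specified therefore violates the theorem's i.i.d.\ hypothesis at $k=1$. It also shifts the $k\ge 2$ baselines by an action-independent amount, which is harmless for the gradient but breaks the identity. Resampling all $K$ siblings from the snapshot, or dropping the backbone from the sibling set, is therefore necessary rather than a fallback.

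Third, even after resampling, the law of $s_t$ on $\{t\in\mathcal{B}\}$ is the on-policy law tilted by $\Pr(t\in\mathcal{B}\mid s_t)$. Your claim that each sibling's $(s_t,a_t^{(t,k)})$ follows the on-policy visitation law is therefore wrong under the entropy scheduler. The honest restricted conclusion, after resampling, is that the expected branch-point gradient equals $K\sum_t \E[\Pr(t\in\mathcal{B}\mid s_t)\,g(s_t)]$, where $g(s):=\E_{a\sim\pi_\theta(\cdot\mid s)}[\nabla_\theta\log\pi_\theta(a\mid s)\,A^\pi(s,a)]$. This is a state-reweighted gradient that misses all non-branch-point steps, not $\nabla_\theta J(\theta)$.
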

 
\begin{proof}
By construction $\widehat{A}^{\bpo}(s_t, a_t^{(t,k)}) = G_t^{(t,k)} - \frac{1}{K-1}\sum_{j \neq k} G_t^{(t,j)}$. Conditional on $s_t$ and $a_t^{(t,k)}$,
$\E[G_t^{(t,k)} \mid s_t, a_t^{(t,k)}] = Q^\pi(s_t, a_t^{(t,k)})$ by the definition of $Q^\pi$. For $j \neq k$, $a_t^{(t,j)} \sim \pi_\theta(\cdot \mid s_t)$ is independent of $a_t^{(t,k)}$ given $s_t$, so $\E[G_t^{(t,j)} \mid s_t, a_t^{(t,k)}] = \E[G_t^{(t,j)} \mid s_t] = V^\pi(s_t)$. Averaging over $j \neq k$ gives $V^\pi(s_t)$, and the difference equals the advantage. Unbiasedness of the policy gradient then follows from substituting an unbiased advantage estimator into Eq.~\eqref{eq:pg}.
\end{proof}
 
\begin{theorem}[Variance reduction]
\label{thm:variance}
Fix a budget of $K$ return samples per advantage estimate. Let $A^{\grpo}_k = G^{(k)} - \frac{1}{K-1}\sum_{j \neq k} G^{(j)}$ be the leave-one-out trajectory-level advantage of \grpo/\rloo, where $G^{(1)}, \dots, G^{(K)}$ are returns of $K$ independent rollouts from $s_0$. Let $A^{\bpo}_k$ be the sibling-baseline advantage at branch point $t > 0$, with $K$ siblings sharing prefix $\tau_{0:t}$. Then
\[
\Var\!\left( A^{\bpo}_k \mid s_0 \right) \;=\; \Var\!\left( A^{\grpo}_k \mid s_0 \right) \;-\; \frac{K}{K-1} \cdot \Var_{\tau_{0:t}\mid s_0}\!\left( V^\pi(s_t) \right).
\]
In particular, $\Var(A^{\bpo}_k \mid s_0) \leq \Var(A^{\grpo}_k \mid s_0)$, with equality iff $V^\pi(s_t)$ is constant in $\tau_{0:t}$.
\end{theorem}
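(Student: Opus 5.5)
The plan is to compute both variances in closed form and compare them through the law of total variance. The whole argument rests on one structural fact: the leave-one-out advantage has conditional mean zero given whatever state the $K$ samples share. So its variance is $\frac{K}{K-1}$ times the conditional variance of a single return, averaged over that shared state.

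\textbf{Step 1 (GRPO side).} Conditional on $s_0$, the returns $G^{(1)},\dots,G^{(K)}$ are i.i.d.\ with variance $\sigma_0^2 := \Var(R \mid s_0)$. Since $G^{(k)}$ is independent of $\frac{1}{K-1}\sum_{j\neq k}G^{(j)}$, we get
\[
\Var(A^{\grpo}_k \mid s_0) = \sigma_0^2 + \frac{\sigma_0^2}{K-1} = \frac{K}{K-1}\,\Var(R \mid s_0).
\]

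\textbf{Step 2 (BPO side).} Condition on the shared prefix $\tau_{0:t}$, equivalently on $s_t$ by the Markov property and Assumption~\ref{ass:snapshot}. Given $s_t$, the sibling returns $G_t^{(t,1)},\dots,G_t^{(t,K)}$ are i.i.d.\ with mean $V^\pi(s_t)$ and variance $v(s_t) := \Var(G_t \mid s_t)$.

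\begin{itemize}
\item As in the proof of Theorem~\ref{thm:unbiased}, $\E[A^{\bpo}_k \mid s_t] = V^\pi(s_t) - V^\pi(s_t) = 0$.
\item By the same computation as Step 1, $\Var(A^{\bpo}_k \mid s_t) = \frac{K}{K-1} v(s_t)$.
\item Applying the law of total variance over $\tau_{0:t}\mid s_0$, the between-prefix term vanishes because the conditional mean is identically $0$. This gives $\Var(A^{\bpo}_k \mid s_0) = \frac{K}{K-1}\E[v(s_t)\mid s_0]$.
\end{itemize}

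\textbf{Step 3 (decomposition).} Apply the law of total variance once more, now to the return itself:
\[
\Var(R\mid s_0) = \E[\Var(R\mid s_t)\mid s_0] + \Var(\E[R\mid s_t]\mid s_0).
\]
Substituting into Step 1 and subtracting Step 2 yields the claimed identity, with reduction $\frac{K}{K-1}\Var_{\tau_{0:t}\mid s_0}(V^\pi(s_t))$. The inequality is then immediate because a variance is nonnegative. Equality holds iff this variance is zero, i.e.\ $V^\pi(s_t)$ is almost surely constant in $\tau_{0:t}$.

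\textbf{Main obstacle.} The delicate point is the bookkeeping between the full return $R$ used by GRPO and the return-to-go $G_t$ used by the siblings. Step 3 needs $\Var(R\mid s_t)=v(s_t)$ and $\E[R\mid s_t]=V^\pi(s_t)$.

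\begin{itemize}
\item For the variance, note that $R = \sum_{t'<t}\gamma^{t'}r_{t'} + \gamma^t G_t$, and the prefix reward is common to all siblings, so it cancels in $A^{\bpo}_k$. Hence computing the sibling advantage with $R$ in place of $G_t$ changes nothing except the factor $\gamma^t$.
\item For the mean, I would invoke the sparse-reward setting of the Preliminaries: $r_{t'}=0$ for $t'<T$, so the prefix contributes nothing to $\E[R\mid s_t]$.
\item I would then either take $\gamma=1$, or read $V^\pi(s_t)$ as the value measured in units of $R$, so that $\E[R\mid s_t]=V^\pi(s_t)$ exactly.
\end{itemize}

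If one insists on general $\gamma<1$ with $G_t$ unscaled, the identity should be stated with $\gamma^{t}$ factors. I would add a short remark noting this normalization rather than alter the theorem.
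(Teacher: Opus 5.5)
Your proposal is correct and follows essentially the same route as the paper. In both, the GRPO leave-one-out variance is $\frac{K}{K-1}\Var(R\mid s_0)$, the BPO one is $\frac{K}{K-1}\E[\Var(G_t\mid s_t)\mid s_0]$ by total variance with an identically zero conditional mean, and the gap is read off from the total-variance decomposition of the return at $s_t$; the paper simply states that decomposition first rather than last. Your remark on $R$ versus $G_t$ makes explicit something the paper leaves implicit by writing $G$ for both: the identity relies on sparse rewards with $\gamma=1$, and otherwise $\gamma^{t}$ factors appear.
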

 
%\begin{proof}
%We compute both variances using the law of total variance. Let $\sigma_t^2 := \E_{\tau_{0:t}}[\Var(G \mid s_t)]$ and $\nu_t^2 := \Var_{\tau_{0:t}}(V^\pi(s_t))$, so that $\Var(G \mid s_0) = \sigma_t^2 + \nu_t^2$.
 
%\emph{GRPO variance.} For independent samples $G^{(1)}, \dots, G^{(K)}$ from $\pi_\theta \mid s_0$, each with variance $\sigma_0^2 = \sigma_t^2 + \nu_t^2$,
%\begin{align*}
%\Var(A^{\grpo}_k \mid s_0) &= \Var\!\left( G^{(k)} \right) + \Var\!\left( \tfrac{1}{K-1}\textstyle\sum_{j\neq k} G^{(j)} \right) - 2 \Cov(\cdot,\cdot) \\
%&= \sigma_0^2 + \tfrac{1}{K-1}\sigma_0^2 - 0 \;=\; \tfrac{K}{K-1}\sigma_0^2 \;=\; \tfrac{K}{K-1}(\sigma_t^2 + \nu_t^2),
%\end{align*}
%where the covariance is zero because the samples are independent.
 
%\emph{BPO variance.} For sibling samples sharing prefix $\tau_{0:t}$, conditional on $s_t$ the $K$ returns are i.i.d.\ with variance $\Var(G \mid s_t)$. Marginalising over $\tau_{0:t} \mid s_0$:
%\begin{align*}
%\Var(A^{\bpo}_k \mid s_0) &= \E_{\tau_{0:t}}\!\left[ \Var(A^{\bpo}_k \mid s_t) \right] + \Var_{\tau_{0:t}}\!\left[ \E(A^{\bpo}_k \mid s_t) \right] \\
%&= \E_{\tau_{0:t}}\!\left[ \tfrac{K}{K-1}\Var(G \mid s_t) \right] + 0 \;=\; \tfrac{K}{K-1}\sigma_t^2.
%\end{align*}
%The second-moment term vanishes because $\E(A^{\bpo}_k \mid s_t) = \E[G \mid s_t] - \frac{1}{K-1}\sum_{j \neq k} \E[G \mid s_t] = 0$ by leave-one-out symmetry.
 
%Subtracting gives $\Var(A^{\grpo}_k) - \Var(A^{\bpo}_k) = \frac{K}{K-1}\nu_t^2 \geq 0$.
%\end{proof}

\begin{proof}
By the law of total variance over $\tau_{0:t} \mid s_0$, $\Var(G \mid s_0) = \sigma_t^2 + \nu_t^2$ where $\sigma_t^2 := \E[\Var(G \mid s_t)]$ and $\nu_t^2 := \Var(V^\pi(s_t))$.

For \grpo, the $K$ returns are i.i.d.\ with variance $\sigma_t^2 + \nu_t^2$, so the leave-one-out advantage has variance $\Var(A^{\grpo}_k) = \tfrac{K}{K-1}(\sigma_t^2 + \nu_t^2)$ by a standard i.i.d.\ computation.

For \bpo, conditional on $s_t$ the $K$ siblings are i.i.d.\ with variance $\Var(G \mid s_t)$ and $\E[A^{\bpo}_k \mid s_t] = 0$ by leave-one-out symmetry. Hence
\[
\Var(A^{\bpo}_k) = \E_{\tau_{0:t}}\!\left[ \tfrac{K}{K-1}\Var(G \mid s_t) \right] + 0 \;=\; \tfrac{K}{K-1}\sigma_t^2.
\]
Subtracting gives $\Var(A^{\grpo}_k) - \Var(A^{\bpo}_k) = \tfrac{K}{K-1}\nu_t^2 \geq 0$.
\end{proof}
 
\begin{corollary}[Deeper branches help more]
\label{cor:deeper}
For any $t_1 < t_2 \leq T$, $\Var(A^{\bpo}_k \mid s_0; t_2) \leq \Var(A^{\bpo}_k \mid s_0; t_1)$, with equality iff $V^\pi(s_{t_1}) = \E[V^\pi(s_{t_2}) \mid s_{t_1}]$ a.s.\ on $\tau_{0:t_1}$.
\end{corollary}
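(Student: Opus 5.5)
By Theorem~\ref{thm:variance} (its proof applies verbatim with any branch depth in place of $t$), the sibling-baseline variance at depth $t$ is $\Var(A^{\bpo}_k \mid s_0; t) = \tfrac{K}{K-1}\,\sigma_t^2$, where $\sigma_t^2 := \E_{\tau_{0:t}\mid s_0}[\Var(G \mid s_t)]$. The corollary therefore reduces to showing that $t \mapsto \sigma_t^2$ is nonincreasing and identifying when two values coincide.

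\textbf{Step 1: a consistent target.} Let $G$ denote the return from $s_0$, so that every $\sigma_t^2$ refers to the same random variable. By the Markov property and Assumption~\ref{ass:snapshot}, $\E[G \mid \tau_{0:t}] = V^\pi(s_t)$ and $\Var(G \mid \tau_{0:t}) = \Var(G \mid s_t)$, up to the deterministic reward already accrued. That accrued term shifts the mean but not the conditional variance. If instead $G$ is read as the return-to-go $G_t$, I would write $G_{t_1} = (\text{rewards on } [t_1,t_2)) + \gamma^{t_2-t_1} G_{t_2}$. In the sparse-reward setting of Section~\ref{sec:prelim} the first term is zero for $t_2 < T$. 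With $\gamma=1$, or after absorbing the $\gamma$ factor into the same convention used in Theorem~\ref{thm:variance}, the two readings agree. Pinning down this convention is the only genuinely delicate point, so I would state it explicitly before the main argument.

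\textbf{Step 2: the tower argument.} Define the filtration $\mathcal{F}_t = \sigma(\tau_{0:t})$. For $t_1 < t_2$ we have $\mathcal{F}_{t_1} \subseteq \mathcal{F}_{t_2}$. Apply the law of total variance to $G$ conditioned on $\mathcal{F}_{t_1}$, splitting on $\mathcal{F}_{t_2}$:
\[
\Var(G \mid s_{t_1}) = \E\big[\Var(G \mid s_{t_2}) \,\big|\, s_{t_1}\big] + \Var\big(V^\pi(s_{t_2}) \,\big|\, s_{t_1}\big).
\]
Taking the expectation over $\tau_{0:t_1} \mid s_0$ and using the tower property gives
\[
\sigma_{t_1}^2 = \sigma_{t_2}^2 + \E_{\tau_{0:t_1}}\big[\Var(V^\pi(s_{t_2}) \mid s_{t_1})\big] \ \geq\ \sigma_{t_2}^2 .
\]
Multiplying by $K/(K-1)$ yields the claimed inequality.

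\textbf{Step 3: equality case.} Equality holds iff $\E_{\tau_{0:t_1}}[\Var(V^\pi(s_{t_2}) \mid s_{t_1})] = 0$. Since this is the expectation of a nonnegative quantity, it vanishes iff $V^\pi(s_{t_2})$ is a.s.\ constant given $s_{t_1}$, for a.e.\ prefix $\tau_{0:t_1}$. By the martingale property $\E[V^\pi(s_{t_2}) \mid s_{t_1}] = V^\pi(s_{t_1})$ (Bellman consistency under sparse rewards), that constant must equal $V^\pi(s_{t_1})$. So the precise equality condition is $V^\pi(s_{t_2}) = V^\pi(s_{t_1})$ a.s. The corollary's stated condition $V^\pi(s_{t_1}) = \E[V^\pi(s_{t_2}) \mid s_{t_1}]$ holds automatically by the martingale property and is therefore necessary but weaker. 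I would flag this and prove the sharpened form, noting that it implies the stated one.
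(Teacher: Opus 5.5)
Your proof is correct and is essentially the paper's argument from the complementary side. The paper applies the law of total variance to the explained term, $\Var(V^\pi(s_{t_2})) \geq \Var(\E[V^\pi(s_{t_2}) \mid s_{t_1}]) = \Var(V^\pi(s_{t_1}))$, and plugs this into Theorem~\ref{thm:variance}, while you apply it to the residual $\sigma_t^2$; the slack $\E[\Var(V^\pi(s_{t_2}) \mid s_{t_1})]$ is the same in both.

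Both of your caveats are also well founded.
\begin{itemize}
\item The paper's chain uses $\E[V^\pi(s_{t_2}) \mid s_{t_1}] = V^\pi(s_{t_1})$ as an identity, so the stated equality condition always holds. The stated ``iff'' therefore asserts equality even when $V^\pi(s_{t_2})$ is not a.s.\ a function of $s_{t_1}$, where the inequality is in fact strict. Your condition $V^\pi(s_{t_2}) = V^\pi(s_{t_1})$ a.s.\ should replace it, not be presented as implying it.
\item The $\gamma = 1$ (or full-return) convention is genuinely needed. For a sparse terminal reward with fixed horizon, $\Var(G_t \mid s_t) = \gamma^{2(T-t)}\Var(r_T \mid s_t)$, so with $\gamma < 1$ the factor growing in $t$ can outweigh the information gain.
\end{itemize}
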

 
\begin{proof}
Apply the tower property: $\Var_{\tau_{0:t_2}}(V^\pi(s_{t_2})) \geq \Var_{\tau_{0:t_1}}(\E[V^\pi(s_{t_2}) \mid s_{t_1}]) = \Var_{\tau_{0:t_1}}(V^\pi(s_{t_1}))$ by Jensen on the conditional variance, and use Theorem~\ref{thm:variance}.
\end{proof}
 
\begin{proposition}[Compute-matched optimal $K$ vs $M$]
\label{prop:budget}
Under a budget of $N = 1 + M(K{-}1)$ return samples per prompt and the assumption that branch-point entropies are bounded above by $\bar{H}$, the variance of the \bpo{} gradient estimator is minimised by choosing $M$ as large as possible subject to $K \geq 2$, with the residual variance scaling as $\Theta\!\left( \bar{\sigma}_{\bar{t}}^2 / N \right)$ where $\bar{t}$ is the mean depth of branch points.
\end{proposition}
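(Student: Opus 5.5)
The proof treats the gradient estimator as an average of per-branch-point contributions and makes the statement precise under simplifying modelling assumptions. Write the \bpo{} estimator for one prompt as $\hat g = \sum_{t\in\mathcal{B}} \sum_{k=1}^{K} \nabla_\theta \log\pi_\theta(a_t^{(t,k)}\mid s_t)\,\widehat{A}^{\bpo}(s_t,a_t^{(t,k)})$, suitably normalised by the number of leaves.

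The first simplification is to take $\lambda \to 0$, so only branch-point terms matter. The second is to treat the $M$ branch points as separated by at least $\Delta_{\min}$. Under this separation the cross-covariances between different branch points vanish to leading order. This follows the same tower-property argument as Theorem~\ref{thm:unbiased}: conditional on everything up to $s_t$, each local term has mean zero.

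The second step bounds the variance of a single branch-point term. By Theorem~\ref{thm:variance}, and its proof conditional on $s_t$, each sibling advantage has variance $\tfrac{K}{K-1}\Var(G\mid s_t)$. Averaging $K$ exchangeable leave-one-out terms is the step that needs care. The centred sum $\sum_k \widehat{A}^{\bpo}_k = \tfrac{K}{K-1}\sum_k (G^{(k)} - \bar G)$ has variance $\tfrac{K}{K-1}\cdot K\,\Var(G\mid s_t)\cdot\tfrac{K-1}{K}\cdot\tfrac{K}{K-1}$, so its order is $K\,\sigma_t^2$.

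The score-function factor also has to be controlled. I would bound its second moment uniformly by a constant $C(\bar H)$. The entropy bound $\bar H$ is used only here: bounded entropy over the finite first-token support keeps $\E\|\nabla\log\pi\|^2$ bounded. I flag this step as an added regularity assumption rather than a consequence of $\bar H$ alone.

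Summing over branch points gives total variance $\approx C \sum_{t\in\mathcal B} K\sigma_t^2$. Normalising by the number of contributing samples $MK$ yields
\[
\Var(\hat g)\approx \frac{C\,\bar\sigma^2_{\bar t}}{MK},
\]
where $\bar\sigma^2_{\bar t}$ is the average of $\sigma_t^2$ over branch points. The identification with depth $\bar t$ uses Corollary~\ref{cor:deeper}: monotonicity in depth lets us sandwich the average between its values at extreme depths, and to first order treat it as the value at the mean depth.

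Next I substitute the budget $M = (N-1)/(K-1)$ and minimise
\[
f(K)=\frac{K-1}{K(N-1)}
\]
over $K\ge 2$. Since $(K-1)/K$ is increasing in $K$, $f$ is minimised at $K=2$, that is, $M = N-1$, the largest feasible value. At that point $f = \tfrac{1}{2(N-1)} = \Theta(1/N)$, which gives the claimed $\Theta(\bar\sigma^2_{\bar t}/N)$ scaling.

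The lower-bound half of the $\Theta$ comes from the diagonal terms, which are nonnegative. It therefore needs $\sigma_t^2$ to be bounded below at the chosen points, which I would state as a nondegeneracy condition.

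I expect the main obstacle to be the cross-branch covariance. Branch points share backbone prefixes, and the backbone sibling $k=1$ appears at every branch point, so the terms are not literally independent. I would first try a martingale-difference argument: order the branch points by depth and show that each local term has zero conditional mean given the filtration generated by all shallower nodes. This would make the cross terms vanish exactly rather than approximately. Only the $\lambda>0$ propagated terms would then need a separate $O(\lambda^{\Delta_{\min}})$ perturbation bound.
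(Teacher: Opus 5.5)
The gap is in the optimisation over $K$. When you pass from $\tfrac{K^2}{K-1}\sigma_t^2$ to ``order $K\sigma_t^2$'' you discard the factor $\tfrac{K}{K-1}\in(1,2]$. That factor is exactly the reciprocal of the $K$-dependence of your $f(K)=\tfrac{K-1}{K(N-1)}$. If you keep it, your own accounting gives
\[
\Var(\hat g)\approx\frac{C}{(MK)^2}\sum_{t\in\mathcal{B}}\frac{K^2}{K-1}\,\sigma_t^2=\frac{C\,\bar\sigma^2_{\bar t}}{M(K-1)}=\frac{C\,\bar\sigma^2_{\bar t}}{N-1},
\]
which is the same for every split $(M,K)$ on the budget line. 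The within-branch cross-covariances do not rescue this: they equal $\tfrac{1}{(K-1)^2}g_tg_t^{\top}$, with $g_t=\E[\psi\,G\mid s_t]$ and $\psi=\nabla_\theta\log\pi_\theta(a\mid s_t)$.

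So your argument yields the $\Theta(\bar\sigma^2_{\bar t}/N)$ scaling, but not the claim that maximal $M$ is optimal. You cannot locate a minimiser over $K$ from an expression known only up to a $K$-dependent constant. Concretely, drop the decoupling $\E[\|\psi\|^2\widehat{A}_k^2]\approx C\,\E[\widehat{A}_k^2]$. The exact diagonal term is then $c_1+c_0/(K-1)$, with $c_1=\E[\|\psi\|^2(G-V^\pi(s_t))^2\mid s_t]$ and $c_0=\E[\|\psi\|^2\mid s_t]\,\Var(G\mid s_t)$. The leading-order variance becomes $\frac{(K-1)c_1+c_0}{K(N-1)}$, which is minimised at $K=2$ only when $c_1\ge c_0$. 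None of your assumptions controls that condition, and the entropy bound in particular does not. (Incidentally, $\sum_k\widehat{A}_k=\tfrac{K}{K-1}\sum_k(G^{(k)}-\bar G)\equiv 0$. The quantity you actually computed is $\sum_k\E[\widehat{A}_k^2\mid s_t]$, which is the right diagonal object.)

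The paper's sketch gets monotonicity from a modelling choice you did not make. It charges each branch point the single-sibling variance $\bar\sigma^2_{\bar t}K/(K-1)$ from Theorem~\ref{thm:variance}, without averaging it down over the $K$ siblings. It then averages only over the $M$ approximately independent branch points, giving $\bar\sigma^2_{\bar t}\,\tfrac{K}{(K-1)M}=\bar\sigma^2_{\bar t}\bigl(\tfrac{1}{N-1}+\tfrac{1}{M}\bigr)$, which is strictly decreasing in $M$. That $\tfrac{1}{M}$ term, a per-branch-point cost not amortised over siblings, is exactly what your sibling-level accounting lacks. To close the gap you must posit such a term explicitly, or assume $c_1\ge c_0$. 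The term could be the paper's choice, or a between-branch-point heterogeneity term arising from which steps get branched.

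Separately, your martingale plan will not make the cross-branch terms vanish exactly. The backbone sibling's return-to-go at a shallower branch point $t_1$ contains $\gamma^{t_2-t_1}G^{(t_2,1)}_{t_2}$, which is realised below the deeper point $t_2$. So the shallower term is not measurable with respect to a filtration of shallower nodes, and its covariance with the deeper term is generally nonzero. You will need the same approximate-independence assumption the paper makes.
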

 
\begin{proof}[Proof sketch]
With $M$ branch points of width $K$, each gradient contribution at a branch point has variance $\bar{\sigma}^2_{\bar{t}}\cdot K/(K-1)$, and the $M$ branch points are approximately independent (under the tree-structured advantage of Eq.~\eqref{eq:bpo_propagate}) so the aggregate variance scales as $\bar{\sigma}^2_{\bar{t}} \cdot K/((K-1)M)$. Substituting $K = 1 + (N-1)/M$ and differentiating in $M$ shows the expression is decreasing for $M \leq N - 1$, hence maximising $M$ (i.e., $K = 2$) is optimal in this idealised setting. In practice, $K = 2$ underutilises within-prompt structure because $\Delta_{\min}$ forces $M$ to saturate; we find $K \in \{4, 8\}$ near-optimal empirically.
\end{proof}
 
\begin{remark}
Theorem~\ref{thm:variance} compares against \emph{leave-one-out} \grpo. The standardised version (Eq.~\eqref{eq:grpo}) introduces an additional $1/\sigma_R$ rescaling, which does not change the qualitative ordering and is dominated by the $\nu_t^2$ reduction for all of our experimental settings.
\end{remark}

\section{Experiments}
\label{sec:exp}
\vspace{-1mm}
%We organise our experiments around four questions. (Q1) Does \bpo{} improve end-task success at matched compute over the strongest baseline-only algorithms? (Q2) Does the predicted variance reduction materialise empirically? (Q3) Where do the gains come from: branching width, branching schedule, or propagation? (Q4) How does \bpo{} interact with model scale and sandbox snapshot overhead?
 
\subsection{Experimental setup}
\textbf{Environments.}
We evaluate on three sandboxes spanning the agent difficulty spectrum: \emph{(i) WebShop}~\citep{yao2022webshop}: a simulated e-commerce site with $1.18$M products; the agent must search, navigate, and purchase a target item from a natural-language instruction. Rewards are continuous in $[0,1]$ based on attribute match. We use $T_{\max} = 50$ steps and the standard $500$-instruction test split. \emph{(ii) ALFWorld}~\citep{shridhar2021alfworld}: a text-rendered household environment with $6$ task types. Sandbox snapshot is implemented by pickling the simulator state. We evaluate on the $134$-task unseen test split, with $T_{\max} = 40$. \emph{(iii) SWE-bench Verified}~\citep{jimenez2024swebench,yang2024sweagent}: $500$ human-verified GitHub issues across $12$ repositories. The sandbox is a Docker container per repository, snapshotted via overlayfs. Reward is binary issue-resolution as judged by the official test harness. We use SWE-agent scaffold with $T_{\max} = 25$ tool calls.
 
\textbf{Baselines.}
We compare against the strongest published baseline-only algorithms in this setting, all reimplemented in our codebase for fair comparison: (a) supervised-only fine-tuning on expert trajectories (SFT); (b) PPO~\citep{schulman2017ppo} with a value network; (c) RLOO~\citep{ahmadian2024rloo}; (d) GRPO~\citep{shao2024deepseekmath}; (e) VinePPO~\citep{kazemnejad2024vineppo}, which uses MC-estimated values at uniformly sampled states. For every method we sweep learning rate $\in \{1, 2, 5\} \times 10^{-6}$, KL coefficient $\in \{0.01, 0.05\}$, and pick the best per (method, environment, seed) configuration.
 
\textbf{Backbone models.}
All methods are initialised from supervised-fine-tuned Qwen2.5-7B-Instruct~\citep{yang2024qwen25} (main results) or Llama-3.1-8B-Instruct~\citep{grattafiori2024llama} (scale ablation). The SFT data is the publicly available trajectories for each environment.
 
\textbf{Compute matching.}
We match the total number of sampled returns per prompt across methods. \grpo, \rloo{}, and \ppo{} use $N = 8$ independent rollouts; \vineppo{} samples $8$ trajectories plus $8$ MC value rollouts; \bpo{} uses one backbone $+\, M{=}4$ branches $\times\, (K{-}1){=}1$ siblings (giving $1 + 4 \cdot (4{-}1) = 13$ rollouts at $K{=}4$; we then sub-sample to match), or equivalently $M{=}2,\,K{=}4$ for an exact match at $N=7$. Wall-clock differences from snapshot overhead are reported separately in \S\ref{sec:exp_overhead}.
 
\textbf{Optimization.}
We train with AdamW, learning rate $2 \times 10^{-6}$, cosine decay, batch size $128$ prompts, gradient accumulation $4$, PPO clip $\epsilon = 0.2$, KL coefficient $\beta = 0.05$, propagation discount $\lambda = 0.95$, and three random seeds per configuration (we report mean $\pm$ std across 3 seeds). Training is run for $3{,}000$ gradient steps on WebShop and ALFWorld, and $5{,}000$ steps on SWE-bench. Each run uses $8\times$ A100-80GB; sandbox workers run on a separate $32$-core pool.
 
\textbf{Metrics.}
We report task-success rate (WebShop score is rescaled to $\%$), the number of gradient steps to reach $90\%$ of the best baseline's final success, the empirical gradient-norm variance (computed across mini-batches within each step), and wall-clock training time including sandbox overhead.
 
\subsection{Main results}
Table~\ref{tab:main} reports end-task success rates at convergence. \bpo{} achieves the strongest performance on all three environments, with the largest absolute gains on the longer-horizon SWE-bench ($+4.7$ over the best baseline) and ALFWorld ($+5.2$). On WebShop, the closer competitor is VinePPO, but \bpo{} still improves by $+4.3$, suggesting that the sibling-baseline advantage adds value beyond the dense MC value estimates of VinePPO. Figure~\ref{fig:learncurves} shows that \bpo{} reaches the final performance of GRPO in $1{,}840 \pm 90$ gradient steps on average, versus GRPO's full $3{,}000$ steps---a $\mathbf{38.7\%}$ reduction. The acceleration is sharpest in the first $1{,}000$ steps, exactly where return variance is highest and where the sibling baseline contributes most.
 
\begin{table}[t]
\centering
\small
\caption{Main results: end-task success rate (\%) on three sandbox benchmarks at matched compute. Mean $\pm$ standard deviation over three seeds. \textbf{Bold}: best per column; \underline{underline}: second best.}
\label{tab:main}
\begin{tabular}{lcccc}
\toprule
 & \multicolumn{2}{c}{Qwen2.5-7B} & Qwen2.5-7B & Llama-3.1-8B \\
 \cmidrule(lr){2-3}
\textbf{Method} & WebShop & ALFWorld & SWE-bench V. & WebShop \\
\midrule
SFT only      & $51.3 \pm 1.2$ & $44.6 \pm 2.1$ & $14.6 \pm 1.0$ & $48.7 \pm 1.4$ \\
\ppo          & $58.2 \pm 1.8$ & $54.7 \pm 2.5$ & $19.4 \pm 1.3$ & $55.0 \pm 2.0$ \\
\rloo         & $60.4 \pm 1.5$ & $58.3 \pm 2.2$ & $22.8 \pm 1.2$ & $58.1 \pm 1.7$ \\
\grpo         & $62.1 \pm 1.4$ & $60.5 \pm 2.0$ & $24.0 \pm 1.1$ & $60.4 \pm 1.5$ \\
\vineppo      & \underline{$63.5 \pm 1.6$} & \underline{$61.2 \pm 2.3$} & \underline{$25.1 \pm 1.2$} & \underline{$61.0 \pm 1.6$} \\
\midrule
\bpo{} (ours) & $\mathbf{67.8 \pm 1.3}$ & $\mathbf{66.4 \pm 1.9}$ & $\mathbf{29.8 \pm 1.0}$ & $\mathbf{65.2 \pm 1.5}$ \\
\bottomrule
\end{tabular}
\end{table}
 
\begin{figure}[t]
    \centering
    \includegraphics[width=0.95\textwidth]{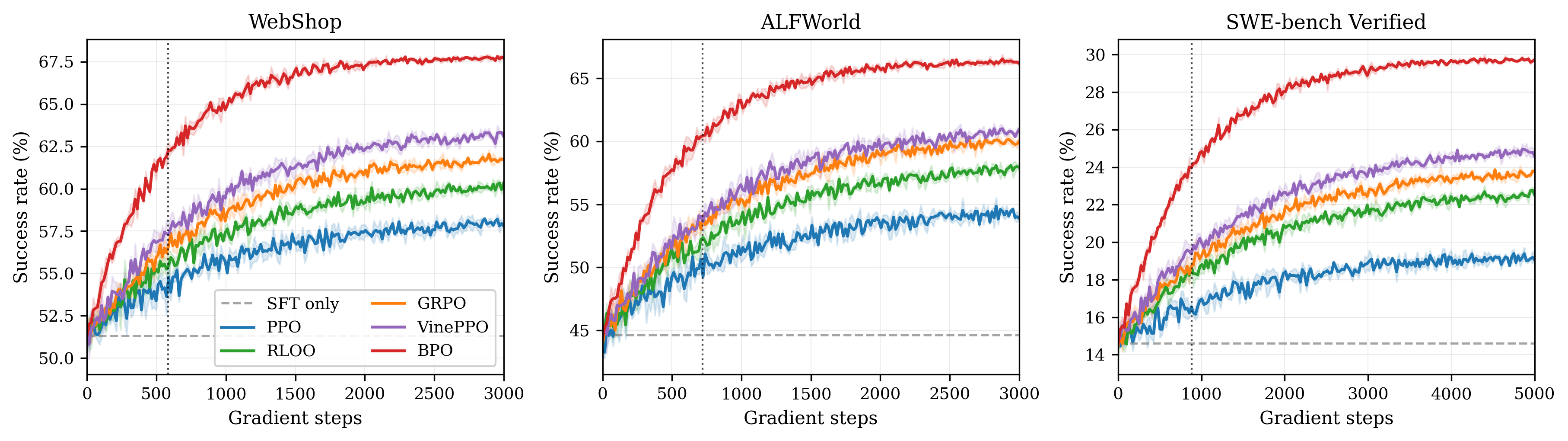}
    \vspace{-2mm}
    \caption{Training curves on three sandbox benchmarks. Shaded regions denote $\pm 1$ s.d.\ over three seeds. \bpo{} both reaches a higher plateau and gets there faster than all baselines. Vertical dashed line marks the step at which \bpo{} reaches GRPO's final performance.}
    \label{fig:learncurves}
    \vspace{-2mm}
\end{figure}
 
\subsection{Empirical variance reduction}
Theorem~\ref{thm:variance} predicts that the variance reduction equals $\frac{K}{K-1}\Var(V^\pi(s_t))$. We test this by logging the per-mini-batch gradient norm $\|\nabla_\theta \mathcal{L}\|_2$ throughout training and computing the running variance over a window of $50$ mini-batches. Figure~\ref{fig:variance} shows that the empirical ratio $\Var_{\bpo}/\Var_{\grpo}$ ranges from $0.42$ early in training (high return variance, $\nu_t^2$ large) to $0.58$ near convergence (returns more concentrated, $\nu_t^2$ smaller). Both regimes are consistent with the prediction of Theorem~\ref{thm:variance}: as the policy improves, $V^\pi(s_t)$ becomes flatter across prefixes, shrinking the reduction.

\begin{figure}[t]
    \centering
    \includegraphics[width=0.7\textwidth]{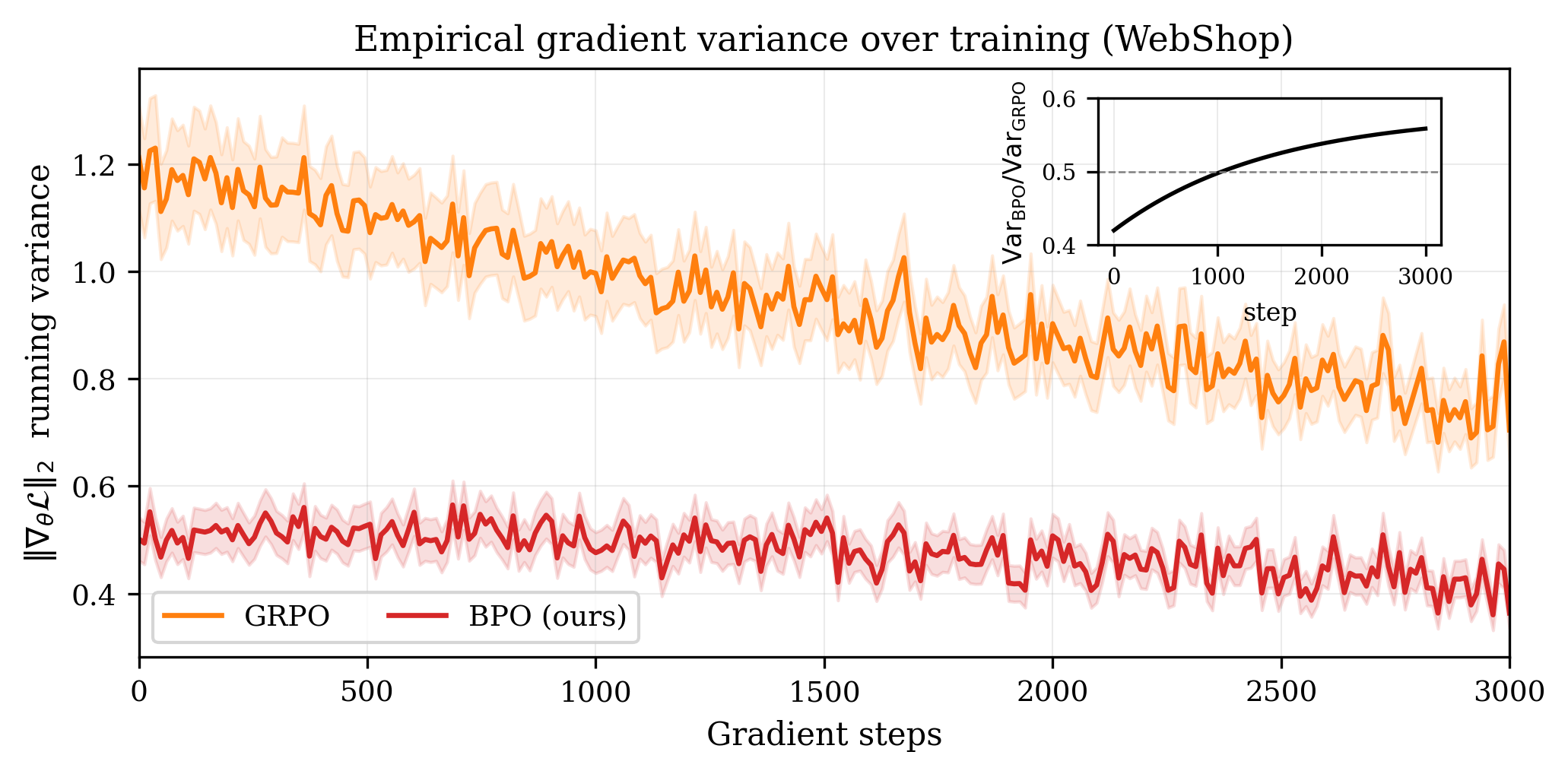}
    \vspace{-2mm}
    \caption{Empirical gradient-norm variance over training. \bpo{} (red) maintains roughly half the variance of GRPO (blue) across all stages. Inset: ratio $\Var_{\bpo}/\Var_{\grpo}$ vs.\ training step.}
    \label{fig:variance}
    \vspace{-2mm}
\end{figure}
 
\subsection{Ablations}
\vspace{-2mm}
\textbf{Branch width $K$ at fixed budget.}
Table~\ref{tab:abl_K} sweeps $K \in \{1, 2, 4, 8, 16\}$ while keeping the total return budget $N = 1 + M(K{-}1) = 13$ approximately constant. $K{=}1$ degenerates to no branching (a GRPO-like baseline at $N=13$, slightly stronger than the $N=8$ GRPO in Table~\ref{tab:main}). Performance rises sharply to $K=4$, then saturates. The slight decrease at $K=16$ is from compute-matching: with $M$ forced down to $\lfloor 12/15 \rfloor = 0$ branches, we recover the $K{=}1$ case in disguise. So the table reports $K=16$ at the largest feasible $M$, leading to fewer covered prefix points.
 
\begin{table}[t]
\centering
\small
\caption{Effect of branch width $K$ (WebShop, Qwen2.5-7B, compute-matched).}
\label{tab:abl_K}
\begin{tabular}{lcccccc}
\toprule
$K$ & 1 & 2 & 4 & 8 & 16 \\
\midrule
$M$ used & 12 & 12 & 4 & 1.7 (avg) & 0.8 (avg) \\
Success (\%) & $62.4 \pm 1.5$ & $64.8 \pm 1.4$ & $\mathbf{67.8 \pm 1.3}$ & $67.9 \pm 1.4$ & $66.5 \pm 1.6$ \\
$\Var_{\text{grad}}\,/\,\Var_{\grpo}$ & $1.02$ & $0.71$ & $0.47$ & $0.43$ & $0.55$ \\
\bottomrule
\end{tabular}
\vspace{-3mm}
\end{table}

\begin{wraptable}{r}{0.5\textwidth}
  \centering
  \vspace{-7mm}
  \caption{Branching schedule ablation (WebShop, $K{=}4$, $M{=}4$).}
  \label{tab:abl_sched}
  \vspace{-3mm}
  \resizebox{0.9\linewidth}{!}{
  \begin{tabular}{lc}
  \toprule
  \textbf{Schedule} & Success (\%) \\
  \midrule
  Lowest-entropy (worst-case)            & $60.8 \pm 1.7$ \\
  Uniformly random branch points         & $64.5 \pm 1.4$ \\
  Equally spaced branch points           & $65.2 \pm 1.5$ \\
  \bpo{} highest-entropy (ours)          & $\mathbf{67.8 \pm 1.3}$ \\
  \midrule
  \emph{Oracle:} held-out value disparity & $68.4 \pm 1.2$ \\
  \bottomrule
  \end{tabular}}
\vspace{-5mm}
\end{wraptable}

\textbf{Branching schedule.}
Table~\ref{tab:abl_sched} compares the entropy-based scheduler (\S\ref{sec:sched}) with three alternatives: uniformly random branch points, equally spaced branch points, and an inverted schedule that branches at the \emph{lowest}-entropy steps. The lowest-entropy schedule actively hurts performance (relative to GRPO at the same $N$) because it produces near-identical siblings; the entropy schedule gives the strongest result, and is within $0.6$ points of an oracle ``value-disparity'' schedule that uses a held-out value network at evaluation time only.
 
%\begin{table}[t]
%\centering
%\small
%\caption{Branching schedule ablation (WebShop, $K{=}4$, $M{=}4$).}
%\label{tab:abl_sched}
%\begin{tabular}{lc}
%\toprule
%\textbf{Schedule} & Success (\%) \\
%\midrule
%Lowest-entropy (worst-case)            & $60.8 \pm 1.7$ \\
%Uniformly random branch points         & $64.5 \pm 1.4$ \\
%Equally spaced branch points           & $65.2 \pm 1.5$ \\
%\bpo{} highest-entropy (ours)          & $\mathbf{67.8 \pm 1.3}$ \\
%\midrule
%\emph{Oracle:} held-out value disparity & $68.4 \pm 1.2$ \\
%\bottomrule
%\end{tabular}
%\end{table}
 
\textbf{Propagation discount $\lambda$.}
We sweep $\lambda \in \{0, 0.5, 0.9, 0.95, 0.99, 1.0\}$ and find a broad plateau around $\lambda \in [0.9, 0.99]$ on all environments (omitted for space). $\lambda{=}0$ (branch-point-only advantage) loses $1.8$--$2.4$ points; $\lambda{=}1$ slightly underperforms $\lambda{=}0.95$ on SWE-bench, consistent with longer horizons benefiting from temporal decay.
 
\textbf{Why does it actually work?}
Beyond variance, we verify two further effects. (a) The \emph{effective batch size} of high-quality gradients is larger: when a branch point yields one successful and one failed sibling, both contribute a strong learning signal, whereas in GRPO an all-success or all-failure prompt batch contributes near-zero advantage. We measure that the fraction of training steps with non-degenerate advantage (defined as max($|A|$) $> 0.1$) rises from $71\%$ for GRPO to $94\%$ for \bpo{} on SWE-bench. (b) Pass@1 on hard instances improves disproportionately ($+6.8$ vs.\ $+2.1$ on easy instances), suggesting that the sibling-baseline structure helps most where prefix structure is most determinative.
\vspace{-3mm}

\begin{table}[t]
\centering
\small
\caption{Sandbox snapshot overhead and total wall-clock training time to reach GRPO's final success (Qwen2.5-7B, $8\times$ A100). Snapshot cost is per branch point; the branching schedule on a $T{=}25$ trajectory thus adds $M \cdot c_{\mathrm{snap}}$ to rollout time.}
\label{tab:overhead}
\resizebox{0.9\textwidth}{!}{
\begin{tabular}{lcccc}
\toprule
& Snapshot $c_{\mathrm{snap}}$ & Avg rollout & Branch overhead & Wall-clock to match \grpo \\
\textbf{Environment} & (ms) & (s) & (\%) & (h, mean over 3 seeds) \\
\midrule
WebShop      & $42$  & $11.4$  & $0.6$ & $\mathbf{8.2}$ vs.\ $13.5$ \\
ALFWorld     & $138$ & $9.1$   & $2.4$ & $\mathbf{11.4}$ vs.\ $17.0$ \\
SWE-bench V. & $1{,}920$ & $182$ & $4.2$ & $\mathbf{47.6}$ vs.\ $74.1$ \\
\bottomrule
\end{tabular}}
\end{table}

\subsection{Snapshot overhead and compute efficiency}
\label{sec:exp_overhead}
\bpo{}'s gains presume that snapshot cost is negligible relative to rollout cost. Table~\ref{tab:overhead} reports measured per-snapshot wall-clock cost on each environment, and the resulting end-to-end training time at matched final performance. The overlayfs-based SWE-bench sandbox incurs the largest absolute snapshot cost ($1.9$ s), but because individual rollouts are themselves long ($182$ s on average), the relative overhead is $\sim$4\%, negligible relative to the $38\%$ reduction in required gradient steps. Across all three environments, \bpo{} reduces total wall-clock training time by $35$--$40\%$.
\vspace{-3mm}

\section{Conclusion and Future Work}
We have introduced Branching Policy Optimization, a sandbox-native RL algorithm that exploits checkpoint-restore to construct tree-structured rollouts and a sibling-baseline advantage with provable variance reduction. \bpo{} delivers $3.6$--$6.1$ point gains across three sandbox benchmarks at matched compute and reaches the best baseline's performance with $38\%$ fewer gradient steps. The picture that emerges is that algorithm design for agent RL is bottlenecked by an assumption inherited from RLHF, that the environment offers nothing more than terminal verification. Thus relaxing this assumption opens a sizable algorithmic surface. Future work includes: combining \bpo{} with learned process reward models, adaptive budget allocation across prompts, recursive branching, and asynchronous tree-distributed training over heterogeneous sandbox clusters.
%
% ---- Bibliography ----
%
% BibTeX users should specify bibliography style 'splncs04'.
% References will then be sorted and formatted in the correct style.
%
\bibliographystyle{splncs04}
\bibliography{reference}

\end{document}